\documentclass[12pt,reqno]{amsart}
\usepackage[foot]{amsaddr}
\usepackage{amsmath}
\usepackage{amssymb}
\usepackage{verbatim}
\usepackage{bbm}
\usepackage{graphicx}
\usepackage[font=footnotesize]{caption}
\usepackage{subcaption}
\captionsetup[subfigure]{labelfont=rm}
\usepackage{mathrsfs}
\usepackage{epstopdf}
\usepackage{multirow}
\usepackage{url}
\usepackage{makecell}
\usepackage{tabularx}
\usepackage{longtable}
\usepackage{graphicx}
\usepackage{tikz}
\usepackage{pgfplots}
\usepackage{xcolor}
\usepackage{pgfplots}
\usepackage{pgfplotstable}
\allowdisplaybreaks

\usepackage[toc]{appendix}
\usepackage[capitalise]{cleveref}
\usepackage{color}
\allowdisplaybreaks

\usepackage[top=0.5in,bottom=0.5in,left=0.7in,right=0.7in]{geometry}

\newtheorem{lemma}{Lemma}

\newtheorem{theorem}[lemma]{Theorem}
\newtheorem{remark}[lemma]{Remark}

\newtheorem{example}[lemma]{Example}

\newcommand{\N}{\mathbb{N}}    
\newcommand{\R}{\mathbb{R}}    
\newcommand{\LL}{\mathbf{\mathcal{L}}}
\newcommand{\totalneurons}{10889 d + 10887}

\newcommand{\tp}{^{\mathsf{T}}}

\DeclareMathOperator*{\argmax}{arg\,max}

\newcommand{\be}{ \begin{equation} }
	\newcommand{\ee}{ \end{equation} }

\numberwithin{equation}{section}
\numberwithin{lemma}{section}


\begin{document}

\title{Optimal Neural Network Approximation for High-Dimensional Continuous Functions}

\author{Ayan Maiti}
\address{Department of Mathematics, Purdue University, West Lafayette, IN, USA 47907.}
\email{maitia@purdue.edu}

\author{Michelle Michelle}
\address{Department of Mathematical and Statistical Sciences, University of Alberta, Edmonton, Alberta, Canada T6G 2G1.}
\email{mmichell@purdue.edu, mmichell@ualberta.ca}

\author{Haizhao Yang}
\address{Department of Mathematics, Department of Computer Science (Affiliated), The University of Maryland Institute for Advanced Computer Studies (Affiliated), University of Maryland, College Park, MD, USA 20742.}
\email{hzyang@umd.edu}


\makeatletter \@addtoreset{equation}{section} \makeatother
	
\begin{abstract}
	Recently, the authors of \cite{SYZ22} developed a neural network with width $36d(2d + 1)$ and depth $11$, which utilizes a special activation function called the elementary universal activation function, to achieve the super approximation property for functions in $C([a,b]^d)$. That is, the constructed network only requires a fixed number of neurons (and thus parameters) to approximate a $d$-variate continuous function on a $d$-dimensional hypercube with arbitrary accuracy. More specifically, only $\mathcal{O}(d^2)$ neurons or parameters are used. One natural question is whether we can reduce the number of these neurons or parameters in such a network. By leveraging a variant of the Kolmogorov Superposition Theorem, \textcolor{black}{we show that there is a composition of networks generated by the elementary universal activation function with at most $\totalneurons$ nonzero parameters such that this super approximation property is attained. The composed network consists of repeated evaluations of two neural networks: one with width $36(2d+1)$ and the other with width 36, both having 5 layers.}  Furthermore, we present a family of continuous functions that requires at least width $d$, and thus at least $d$ neurons or parameters, to achieve arbitrary accuracy in its approximation. This suggests that the number of nonzero parameters is optimal in the sense that it grows linearly with the input dimension $d$, unlike some approximation methods where parameters may grow exponentially with $d$. 
\end{abstract}

\keywords{Neural network, approximation theory, high-dimensional functions, special activation functions}
\subjclass[2020]{41A99,  68T07}

\maketitle

\pagenumbering{arabic}


\section{Introduction}

	The wide applicability of neural networks has generated tremendous interest, leading to many studies on their approximation properties. Some early work on this subject can be traced back to \cite{C89,HSW89}. As summarized in \cite{SYZ22}, there have been several research paths in this area such as finding nearly optimal asymptotic approximation errors of ReLU networks for various classes of functions \cite{HY21,PV18,Y18, YZ20}, deriving nearly optimal non-asymptotic approximation errors for continuous and $C^{s}$ functions \cite{LSYZ21, SYZ20}, mitigating the curse of dimensionality in certain function spaces \cite{B93, EW, MYD21}, and improving the approximation properties by using a combination of activation functions and/or constructing more sophisticated ones \cite{MP99, SYZ21, SYZ22, Y21,YZ20}.
	
	Building on the last point, \cite{Y21} presented several explicit examples of superexpressive activation functions, which if used in a network allows us to approximate a $d$-variate continuous function with a fixed architecture and arbitrary accuracy. That is, the number of neurons remains the same, but the values of the parameters may change. This approach is notably different from using a standard network with commonly used activation functions such as ReLU. To achieve the desired accuracy, a standard network with a commonly used activation function typically needs to have its width and/or its depth increased based on the target accuracy. The growth of the number of neurons in terms of the target accuracy may range from polynomial to, in the worst-case scenario, exponential. 
	
	The existence of a special activation function mentioned earlier has been known since the work of \cite{MP99}; however, its explicit form is unknown, even though the activation itself has many desirable properties such as sigmoidal, strictly increasing, and analytic. The results in \cite[Theorem 5.22 and Section 5.4.1]{I21} improved \cite{MP99} by reducing the number of neurons and using an activation function that can be evaluated via an algorithm. In the same vein, \cite{SYZ22} introduced several new explicit activation functions, called universal activation functions, that allow a network with a fixed architecture to achieve arbitrary accuracy when approximating a $d$-variate continuous function. In a follow-up work, \cite{{Wetal24}} presented additional examples of universal activation functions and evaluated their performance on various datasets.
	
	More recently, there have been studies that look into the minimum required width of networks generated by various activation functions to achieve the universal approximation property for functions in $L_p$ spaces and continuous functions \cite{C23, HS18, J19, KL20, LDJC23, LC24, PYLS21}. We briefly review relevant results for continuous functions. Suppose that $K$ is a compact domain in $\R^{d_{x}}$ and let $w_{\min}$ denote the minimum width. The authors of \cite{HS18} found that a ReLU network requires $w_{\min} = d_{x} + 1$ for functions in  $C(K,\R)$ (or equivalently, $C(K)$). More generally, for functions in $C(K,\R^{d_{y}})$, a ReLU+STEP network requires $w_{\min} = \max(d_{x}+1,d_{y})$ \cite{PYLS21}, a network with an arbitrary activation function requires $w_{\min} \ge \max(d_{x}, d_{y})$ \cite{C23}, and a ReLU+FLOOR network requires $w_{\min} \ge \max(d_{x},d_{y},2)$ \cite{C23}. Additionally, the author of \cite{G03} showed that a network with an activation function that is twice differentiable at least at a point with its second derivative being nonzero requires width $d_x + d_y + 2$.
	If we consider functions in $C([0,1]^{d_{x}},\R^{d_{y}})$, then a network generated by an activation function that can be approximated by a sequence of continuous one-to-one continuous functions requires $w_{\min} \ge d_{x} +1$ \cite{J19}, a network generated by a non-polynomial activation function that is continuously differentiable at least a point requires $w_{\min} \le d_{x} + d_{y} + 1$ \cite{KL20}, and a network with a non-affine polynomial activation function requires $w_{\min} \le d_{x} + d_{y} + 2$ \cite{KL20}. 

	To approximate a $d$-variate continuous function on a $d$-dimensional hypercube, some network constructions \cite{LS23, MYD21, SYZ22} rely on the Kolmogorov Superposition Theorem (KST) \cite{K57}. KST represents such a function in terms of compositions and additions of univariate continuous functions on bounded intervals, thereby making the analysis of such a function highly convenient.
	
	The present paper is motivated by the findings of \cite{SYZ22}. Their network has width $36d(2d+1)$ and depth $11$, and is capable of approximating functions in $C([a,b]^d)$ with arbitrary accuracy. The authors used the original KST \cite{K57} to convert the analysis of a $d$-variate continuous function into that of several univariate continuous functions. Furthermore, they constructed an elementary universal activation function (EUAF) network to approximate a univariate continuous function with arbitrary accuracy. A natural question is whether the same super approximation property can be achieved with fewer neurons or parameters. 
	
	The main contributions of this paper are twofold. Firstly, we show that there is a composition of EUAF networks with at most $\totalneurons$ nonzero parameters achieving the desired super approximation property. That is, we can approximate a target function in $C([a,b]^d)$ with arbitrary accuracy using at most $\totalneurons$ nonzero parameters. \textcolor{black}{The proposed network is nonstandard in that it requires repeated evaluations of two neural networks: one with width $36(2d+1)$ and the other with width 36, both having 5 layers.} To obtain a better approximation, only the values of these parameters change. This is in stark contrast to a standard network generated by commonly used activation functions such as ReLU, where the number of parameters typically grows to obtain a more accurate approximation. The network in \cite{SYZ22} requires $\mathcal{O}(d^2)$ neurons or parameters, because it relies on the original KST, which has $2d+1$ outer functions and $(2d+1)d$ inner functions. Using a variant of KST (\cite{K75} or \cite[Theorem 5]{MP99}) allows us to only use 1 outer function and $2d+1$ inner functions. Therefore, we only need to approximate these $2d+2$ functions by EUAF networks once and evaluate them repeatedly (see \cref{fig:compflow}). Not only can we reduce the number of nonzero parameters to $\mathcal{O}(d)$, but the proof is simplified. Secondly, we present a family of continuous functions that requires at least width $d$, and thus at least $d$ neurons or parameters to achieve arbitrary accuracy in its approximation. These results suggest that the number of nonzero parameters for approximating functions in $C([a,b]^d)$ is optimal in that it linearly depends on the input dimension $d$. This requirement is significantly less severe than some other approximation methods, which may use an exponentially growing number of parameters. To better understand how our study compares to others in terms of the width and depth requirements, we refer readers to \cref{fig:depthwidth}.
	
	The organization of this paper is as follows. In \cref{sec:prelim}, we review some basic notations and key ingredients used in the proof of our main results. In \cref{sec:main}, we show the existence of a composition of EUAF networks with at most $\totalneurons$ nonzero parameters approximating functions in $C([a,b]^d)$ with arbitrary accuracy. Additionally, we present a family of continuous functions that requires at least width $d$ (or $d$ neurons) for its network approximation to achieve arbitrary accuracy.  

	\begin{figure}[htbp]
		\includegraphics[width=\textwidth]{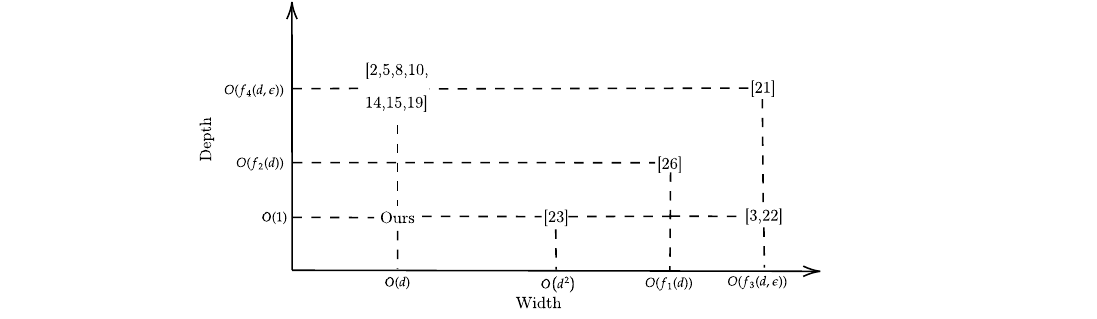}
		\caption{A diagram showing relevant studies discussing requirements for achieving a target approximation accuracy $\epsilon$ in terms of the width, depth, and the input dimension $d$ for a $d$-variate continuous function on a $d$-dimensional hypercube with a scalar output \textcolor{black}{and activation functions with explicit expressions}. Studies with width $\mathcal{O}(f_3(d,\epsilon))$ and/or depth $\mathcal{O}(f_4(d,\epsilon))$ imply that achieving a more accurate approximation requires increasing the depth and/or width based on $d$ and $\epsilon$ (and possibly other factors like the modulus of continuity of the target function). Our paper and \cite{SYZ22, Y21}  
		have the super approximation property in the sense that only a fixed number of parameters is needed to achieve arbitrary accuracy. The network constructed in \cite{Y21} has width $\mathcal{O}(f_1(d))$ and depth $\mathcal{O}(f_2(d))$, where $f_1$ and $f_2$ are some functions that are not known explicitly. \textcolor{black}{Our proposed network has a nonstandard representation, as it involves repeated evaluations of two neural networks with width $\mathcal{O}(d)$; more specifically, one with width $36(2d+1)$ and the other with width $36$, both having depth 5.  See \cref{fig:compflow} for our network.}}
		\label{fig:depthwidth}
	\end{figure}

\section{Preliminaries} \label{sec:prelim}
	For a given activation function $\sigma$, the function $\phi: \R^{d_{\text{in}}} \rightarrow \R^{d_{\text{out}}}$ is a $\sigma$ \textcolor{black}{network} with $L \in \N$ layers if 
	\be \label{NN}
	\phi := \LL_{L} \circ \sigma \circ \LL_{L-1} \circ \sigma \circ \dots \circ \sigma \circ \LL_{1} \circ \sigma \circ \LL_{0},
	\ee
	where $\LL_i(\mathbf{y}_i):=\mathbf{W}_{i} \mathbf{y}_i + \mathbf{b}_i$ with a weight matrix $\mathbf{W}_{i} \in \R^{N_{i+1} \times N_{i}}$, $\mathbf{y}_i \in \R^{N_{i}}$ with $\mathbf{y}_i=(y_1,\ldots,y_{N_{i}})\tp$, a bias vector $\mathbf{b}_i \in \R^{N_{i+1}}$, $N_0=d_{\text{in}}$, $N_{L+1}=d_{\text{out}}$, and the activation function is applied elementwise (i.e., $\sigma(\mathbf{y}) := (\sigma(y_1),\ldots,\sigma(y_{N_{i}}))\tp$). If $N_i=N$ for all $1\le i \le L$ (i.e., there are $N$ neurons for each hidden layer), then we say that such a $\sigma$ \textcolor{black}{network} has width $N$, depth $L$, and $N \times L$ neurons with the total number of parameters equaling to $\sum_{i=0}^{L} (N_{i+1} N_i + N_{i+1})$.

	There are many available activation functions in the literature. We are particularly interested in the EUAF introduced in \cite{SYZ22}, which is defined as 
	\be \label{EUAF}
	\sigma(x) := 
	\begin{cases}
		\left|x - 2 \lfloor \tfrac{x+1}{2} \rfloor\right|, & x \in [0,\infty), \\
		\frac{x}{|x|+1}, & x \in (-\infty,0). \\
	\end{cases}
	\ee
	As can be seen from its plot in \cref{fig:EUAFpic}, the function $\sigma$ resembles a softsign function for all $x<0$ and a periodic hat function for all $x>0$. In this paper, we shall focus on EUAF networks (i.e., $\sigma$ networks with $\sigma$ defined in \eqref{EUAF}).
		
	\begin{figure}[htbp]
		\includegraphics[width=0.3\textwidth]{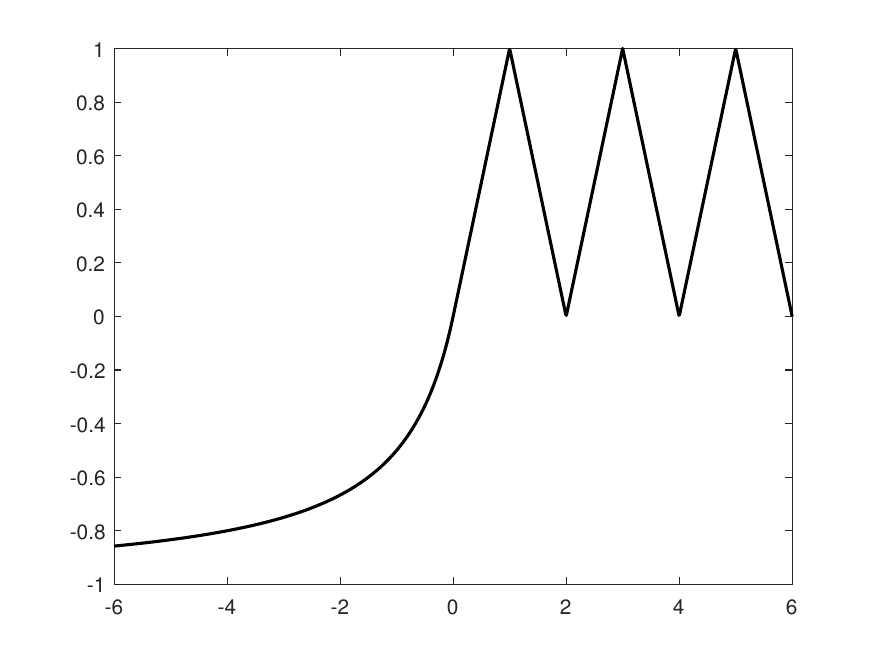}
		\caption{The plot of the EUAF in \eqref{EUAF}.}
		\label{fig:EUAFpic}
	\end{figure}
		
		
	We first present the two key ingredients in the construction an EUAF \textcolor{black}{network} with at most $\totalneurons$ nonzero parameters that can approximate any function in $C([a,b]^d)$ with arbitrary accuracy. The first ingredient is a version of KST \cite{K57}, which was studied in \cite{K75} and utilized in the construction of the network in \cite{MP99}. In contrast to the original KST, the following version requires only one outer function and $2d+1$ inner functions, which enables us to use only $\totalneurons$ nonzero parameters.
	\begin{theorem} \label{KST}
		(\cite{K75} or \cite[Theorem 5]{MP99}) Let $\mathbf{x} = (x_1,\dots,x_d)^{\tp}$. There exist $d$ constants $\lambda_j>0$, $j=1,\ldots,d$, $\sum_{j=1}^{d} \lambda_j \le 1$, and $2d+1$ continuous strictly increasing functions $h_i$, $1 \le i \le 2d+1$, which map $[0,1]$ to itself, such that every continuous function $f$ of $d$ variables on $[0,1]^d$ can be represented in the form 
		\[
		f(\mathbf{x}) = \sum_{i=1}^{2d+1} g\left( \sum_{j=1}^{d} \lambda_j h_i(x_j) \right)
		\]
		for some $g \in C([0,1])$ depending on $f$.
	\end{theorem}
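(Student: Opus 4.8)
This is the Kolmogorov Superposition Theorem in its refined, single--outer--function form (Lorentz, Sprecher, Kahane), so my plan is to reconstruct that argument. It separates cleanly into two stages that are logically independent: first build the constants $\lambda_j$ and the inner functions $h_i$ \emph{with no reference to $f$}, arranging a strong point/cube separation property; then, for a given $f$, construct the single outer function $g$ by an iteration that shrinks the uniform error by a fixed factor per step. I would fix the $\lambda_j>0$ once and for all to be rationally independent with $\sum_{j=1}^d\lambda_j\le 1$, and regard the $(2d+1)$-tuples $\mathbf{h}=(h_1,\dots,h_{2d+1})$ of continuous nondecreasing surjections $[0,1]\to[0,1]$ as a complete metric space $\mathcal{H}$ under the uniform metric (strict monotonicity of the generic choice being arranged by a routine device). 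Write $\xi_i(\mathbf{x}):=\sum_{j=1}^d\lambda_j h_i(x_j)$, so $\xi_i$ maps $[0,1]^d$ into $[0,\sum_j\lambda_j]\subseteq[0,1]$, and for $n\in\N$ let $\mathcal{Q}_n$ denote the dyadic (or $(2n+1)$-adic, following Kolmogorov's staggered spacing) subcubes of $[0,1]^d$.

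For Stage 1, call $\mathbf{h}$ \emph{$n$-separating} if the intervals $\xi_i(Q)$ are pairwise disjoint, where $Q$ ranges over a suitable Kolmogorov subfamily of $\mathcal{Q}_n$ \emph{and $i$ ranges over all of $1,\dots,2d+1$ simultaneously}; this simultaneous-across-$i$ disjointness is precisely what will allow a single $g$ rather than $2d+1$ of them. The set of $n$-separating $\mathbf{h}$ is open (disjointness of finitely many intervals whose endpoints depend continuously on $\mathbf{h}$) and — this is the crux — dense: given any admissible $\mathbf{h}$, small monotone perturbations of the $h_i$ on finitely many intervals can pull the finitely many relevant intervals $\xi_i(Q)$ apart, and rational independence of the $\lambda_j$ rules out forced coincidences. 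By the Baire category theorem the intersection over all $n$ is nonempty; fix any $\mathbf{h}$ in it, so that $\mathbf{h}$ is $n$-separating for \emph{every} $n$.

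For Stage 2, normalize $\|f\|_\infty\le 1$ and iterate. Suppose after $k$ steps we have $g_k\in C([0,1])$ with residual $r_k:=f-\sum_{i=1}^{2d+1}g_k\circ\xi_i$ satisfying $\|r_k\|_\infty\le\theta^k$, where $\theta:=\tfrac{2d}{2d+1}+\eta<1$ for a small fixed $\eta>0$. Pick $n$ large enough that $\mathbf{h}$ is $n$-separating and the oscillation of $r_k$ on each cube of $\mathcal{Q}_n$ is at most $\eta\|r_k\|_\infty$. By Kolmogorov's combinatorial lemma there are $2d+1$ subfamilies of pairwise-disjoint cubes of $\mathcal{Q}_n$ such that every $\mathbf{x}\in[0,1]^d$ lies in cubes from at least $d+1$ of them. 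Since all the intervals $\xi_i(Q)$ are mutually disjoint, I can \emph{unambiguously} define an increment $\delta g\in C([0,1])$ equal to the constant $\tfrac{1}{2d+1}r_k(\mathbf{x}_Q)$ on $\xi_i(Q)$ (with $\mathbf{x}_Q$ the center of $Q$, $Q$ in the $i$-th subfamily) and interpolated elsewhere with $\|\delta g\|_\infty\le\tfrac{1}{2d+1}\|r_k\|_\infty$. For any $\mathbf{x}$, at least $d+1$ of the terms $\delta g(\xi_i(\mathbf{x}))$ equal $\tfrac{1}{2d+1}r_k(\mathbf{x})$ up to the oscillation bound, and the remaining at most $d$ terms are bounded by $\tfrac{1}{2d+1}\|r_k\|_\infty$, which gives
\[
\Bigl\| r_k-\sum_{i=1}^{2d+1}\delta g\circ\xi_i\Bigr\|_\infty \le \Bigl(1-\tfrac{d+1}{2d+1}\Bigr)\|r_k\|_\infty + \tfrac{d}{2d+1}\|r_k\|_\infty + \eta\|r_k\|_\infty = \Bigl(\tfrac{2d}{2d+1}+\eta\Bigr)\|r_k\|_\infty .
\]
Setting $g_{k+1}:=g_k+\delta g$ yields $\|r_{k+1}\|_\infty\le\theta^{k+1}$, and since the $k$-th increment has norm $\le\tfrac{1}{2d+1}\theta^k$, the series of increments converges uniformly to some $g\in C([0,1])$ with $f=\sum_{i=1}^{2d+1}g\circ\xi_i$; extending $g$ from $[0,\sum_j\lambda_j]$ to $[0,1]$ continuously finishes it.

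The main obstacle is the density half of Stage 1: showing that an \emph{arbitrary} admissible $\mathbf{h}$ can be perturbed, within the monotone surjections, so that all of the finitely many intervals $\xi_i(Q)$ become pairwise disjoint \emph{simultaneously across $i$}. This is exactly the step that upgrades the classical $2d+1$-outer-function statement to one outer function, and it is where the arithmetic of the $\lambda_j$ and Kolmogorov's staggered-grid bookkeeping really enter; completeness of $\mathcal{H}$, the openness of the $n$-separating sets, the combinatorial covering lemma, and the geometric iteration are all routine once that perturbation lemma is in hand.
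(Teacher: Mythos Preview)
The paper does not prove \cref{KST}; it is quoted as a known result and attributed to Kahane \cite{K75} (restated as \cite[Theorem 5]{MP99}), so there is no in-paper argument to compare against. Your plan is precisely the Kahane route that \cite{K75} takes: fix rationally independent $\lambda_j$, run a Baire category argument on $(2d+1)$-tuples of monotone inner functions to secure the simultaneous-across-$i$ interval separation, and then build a single outer $g$ by the geometric contraction iteration. Your Stage~2 arithmetic is correct, giving the contraction factor $\theta=\tfrac{2d}{2d+1}+\eta<1$, and you have correctly flagged the only nontrivial step, namely the density half of Stage~1 (perturbing within the monotone surjections so that all the $\xi_i(Q)$ become pairwise disjoint across $i$ and $Q$ simultaneously); this is exactly where rational independence of the $\lambda_j$ is used and where the real work lies in \cite{K75}. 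One small point to tidy: you work in the space of nondecreasing surjections to get completeness, but the statement asks for strictly increasing $h_i$; make explicit that strict monotonicity is itself a dense $G_\delta$ condition (or obtain it at the end by a tiny convex perturbation), rather than leaving it as a ``routine device.''
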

	The second ingredient is the following result from \cite{SYZ22} on the existence of an EUAF network with fixed width and depth that can approximate any function in $C([a,b])$ with arbitrary accuracy.
	\begin{theorem} \label{EUAF1D}
		\cite[Theorem 6]{SYZ22} Let $f \in C([a,b])$. Then, for an arbitrary $\epsilon >0$, there exists a function $\phi$ generated by an EUAF network with width 36 and depth 5 such that 
		\[
		|\phi(x) - f(x)| < \epsilon \quad \text{for any } x \in [a,b] \subseteq \mathbb{R}.
		\]
	\end{theorem}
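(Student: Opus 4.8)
Since \cref{EUAF1D} is quoted verbatim from \cite[Theorem~6]{SYZ22}, I would not reprove it in full; I instead outline the strategy behind it. The plan has three stages: reduce to the unit interval, replace $f$ by a step function away from a \emph{trifling region}, and then realize the resulting finite plateau pattern by an EUAF network whose size does not depend on the fineness of the discretization --- the last stage being where the $2$-periodicity of $\sigma$ on $[0,\infty)$ is decisive.

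First, it suffices to take $f \in C([0,1])$: precomposing with the affine map $t \mapsto a+(b-a)t$ is absorbed into the input map $\LL_0$, changing neither the width nor the depth. Given $\epsilon>0$, choose $N\in\N$ so large that $\omega_f(1/N)<\epsilon/3$, where $\omega_f$ is the modulus of continuity of $f$, and fix a small $\delta\in(0,1/N)$. On the set $E_\delta:=\bigcup_{k=0}^{N-1}[k/N,(k+1)/N-\delta]$ we will arrange $|\phi(x)-f(k/N)|<\epsilon/3$ whenever $x\in[k/N,(k+1)/N-\delta]$; on the thin leftover strips we merely require $\phi$ to stay between the two neighbouring plateau values, which is automatic once consecutive plateau values differ by at most $\omega_f(1/N)$. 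Adding the two estimates gives $\|\phi-f\|_{\infty,[0,1]}<\epsilon$. This standard device is needed because a continuous $\phi$ cannot reproduce a genuine step function everywhere, only on $E_\delta$.

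For the third stage I would build $\phi$ as the composition of a \emph{quantizer} --- a fixed-size EUAF sub-network sending each $x\in[k/N,(k+1)/N-\delta]$ to (approximately) a distinct real label of the index $k$ --- with a \emph{value-assignment} sub-network that maps each such label to (approximately) $f(k/N)$. Both pieces are constructed with a number of neurons independent of $N$, and this independence is exactly where the periodicity of $\sigma$ enters: since $x\mapsto\sigma(wx+b)$ has period $2/w$, rescaling the input produces arbitrarily high-frequency sawtooth behaviour, so a bounded number of neurons can (i) separate and snap the $N$ subintervals of $E_\delta$ to $N$ distinct labels and (ii) pack the $N$ prescribed outputs $f(k/N)$ into the oscillations of a single activation evaluated at a suitably chosen frequency; the strictly increasing branch $\sigma(x)=x/(|x|+1)$ on $(-\infty,0)$ is used to keep intermediate quantities inside the ranges where each branch has its stated form and to manage signs. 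Composing these two sub-networks, preceded by the affine normalization and followed by a final affine readout $\LL_5$, yields an EUAF network with $5$ hidden layers; maintaining a constant number of parallel copies of each component --- to split positive and negative parts and to carry quantities between the two stages --- brings the width to $36$.

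I expect the genuinely hard part to be the value-assignment step: its neuron count must be independent of $N$, which is the precise mechanism of the super-approximation property, and it rests on the arithmetic of the triangle wave (a density/equidistribution argument for the relevant orbit) rather than on any soft approximation estimate. By contrast, the reduction to $[0,1]$ and the trifling-region argument are routine, and the quantizer is straightforward once the periodicity trick is available. Verifying that one fixed architecture of width $36$ and depth $5$ works simultaneously for every $N$ and every $f\in C([0,1])$ is exactly the technical content of \cite[Theorem~6]{SYZ22}, which we therefore cite rather than reconstruct here.
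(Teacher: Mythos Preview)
Your proposal is correct and matches the paper's treatment: the paper does not prove \cref{EUAF1D} either but simply cites \cite[Theorem~6]{SYZ22} and then summarizes exactly the three-step procedure you describe --- partition $[a,b]$ into sub-intervals, build a sub-network mapping each sub-interval to an integer label, then build a second sub-network mapping each label to the corresponding function value. Your outline is more detailed than the paper's one-paragraph sketch (in particular, your remarks on the trifling region, the role of periodicity in keeping the neuron count independent of $N$, and where the hard part lies), but the approach is the same.
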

	The above theorem indicates that we have $72 + 4 \times (36^2 + 36) + 37 = 5437$ parameters, since $N_0=N_5=1$ and $N_i=36$ for all $1\le i \le 4$. As outlined in \cite{SYZ22}, the construction of such an EUAF network was performed by using a three-step procedure: (1) divide the bounded interval into several uniform sub-intervals (the number of these smaller intervals depends on the prescribed error and the target function), (2) enumerate them and build a sub-network that maps each sub-interval to its index, and (3) build another sub-network that maps the index of the sub-interval to the target function evaluated at the left endpoint of the sub-interval. 	
		
	There is also a possibility of further reducing the number of nonzero parameters in the network by combining the EUAF with superexpressive activation functions presented in \cite{Y21}. However, for simplicity, we choose to use the same activation function throughout the entire network and adhere to Theorem \ref{EUAF1D}. Moreover, in the context of techniques used in the paper, reducing the number of nonzero parameters in the approximation of a function in $C([a,b])$ will not lead to a reduction in the order of magnitude with respect to $d$, when combining this result with the KST to approximate a function in $C([a,b]^d)$.
		
	\section{Main results} \label{sec:main}
	Now, we are ready to present our first main result. The following theorem guarantees the existence of a composition of EUAF networks with $\totalneurons$ nonzero parameters that can approximate any function in $C([a,b]^d)$ with arbitrary accuracy. 
	\begin{theorem} \label{thm:uap}
		Let $f \in C([a,b]^d)$. Then, for an arbitrary $\epsilon>0$, there exists a function $\phi$ generated by \textcolor{black}{a composition of EUAF networks} with at most $\totalneurons$ nonzero parameters such that 
		\[
		|f(\mathbf{x}) - \phi(\mathbf{x})| < \epsilon \quad \text{for all } \mathbf{x} \in [a,b]^d.
		\]
	\end{theorem}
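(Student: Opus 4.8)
The plan is to combine the two key ingredients---\cref{KST} and \cref{EUAF1D}---in a straightforward composition, being careful to track exactly how many neurons are used and, crucially, which of them are \emph{intrinsic} (i.e., occur only once in the architecture, with the same parameters reused at every evaluation point). First I would perform an affine reduction from $[a,b]^d$ to $[0,1]^d$: the map $\mathbf{x} \mapsto (\mathbf{x}-a\mathbf{1})/(b-a)$ is a single linear layer $\LL_0$ applied coordinatewise, so it introduces no hidden neurons and lets us assume $f \in C([0,1]^d)$ without loss of generality. Then \cref{KST} gives constants $\lambda_j$ and functions $h_1,\dots,h_{2d+1}$ mapping $[0,1]$ into $[0,1]$, together with an outer $g \in C([0,1])$, such that $f(\mathbf{x}) = \sum_{i=1}^{2d+1} g\!\left(\sum_{j=1}^d \lambda_j h_i(x_j)\right)$. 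Since the argument of $g$ lies in $[0,\sum_j \lambda_j] \subseteq [0,1]$, both $g$ and each $h_i$ are univariate continuous functions on a compact interval, so \cref{EUAF1D} applies to each of them.

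The heart of the construction is to exploit the fact that KST uses only \emph{one} outer function $g$ and that the same $2d+1$ inner functions $h_i$ are applied to each coordinate. I would first fix a target accuracy: choose $\epsilon_1>0$ (to be specified at the end via a uniform-continuity argument on $g$) and, using \cref{EUAF1D}, build one EUAF subnetwork $\tilde h$ with width $36$ and depth $5$ that simultaneously approximates \emph{all} of $h_1,\dots,h_{2d+1}$ --- more precisely, I would approximate the single function $H:[0,1]\times\{1,\dots,2d+1\}$ encoded appropriately, or more simply just note that we need $2d+1$ separate copies only if we insist on parallel evaluation; viewed as the computational flow in \cref{fig:compflow}, each $h_i$ is one fixed subnetwork evaluated $d$ times (once per coordinate) and $g$ is one fixed subnetwork evaluated $2d+1$ times. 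Thus the intrinsic content is: $2d+1$ copies of an inner EUAF subnetwork plus $1$ copy of an outer EUAF subnetwork, i.e.\ $2d+2$ univariate EUAF subnetworks in total. Each such subnetwork, by \cref{EUAF1D}, has $5$ hidden layers of width $36$, hence $180$ neurons. This already gives roughly $180(2d+2) = 360d+360$ neurons; the remaining $6d+5$ (to reach $366d+365$) I would account for by the extra affine/combination layers needed to form the inner sums $\sum_j \lambda_j h_i(x_j)$, to route the $d$ inputs into the $2d+1$ inner blocks, and to sum the $2d+1$ outer outputs into the scalar output --- these bookkeeping layers contribute a linear-in-$d$ number of neurons that I would count precisely.

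For the error analysis, I would first choose the approximations $\tilde h_i$ of $h_i$ so accurately that the perturbed inner sums $s_i := \sum_j \lambda_j \tilde h_i(x_j)$ stay within $[0,1]$ (shrinking slightly or composing with a clip if necessary) and satisfy $|s_i - \sum_j \lambda_j h_i(x_j)| < \delta$ uniformly in $\mathbf{x}$, where $\delta$ is chosen from the uniform continuity of $g$ on $[0,1]$ so that $|g(s_i) - g(\sum_j \lambda_j h_i(x_j))| < \epsilon/(2(2d+1))$. Then I would choose the approximation $\tilde g$ of $g$ with sup-norm error below $\epsilon/(2(2d+1))$, so that $|\tilde g(s_i) - g(s_i)| < \epsilon/(2(2d+1))$. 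Summing over the $2d+1$ terms and applying the triangle inequality yields $|\phi(\mathbf{x}) - f(\mathbf{x})| < \epsilon$ for all $\mathbf{x}\in[0,1]^d$, hence for all $\mathbf{x}\in[a,b]^d$ after undoing the affine change of variables.

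The main obstacle I anticipate is not conceptual but combinatorial-bookkeeping: one must carefully assemble the $2d+2$ univariate blocks and the routing/summation layers into a single valid feedforward network of the form \eqref{NN}, making sure the widths of intermediate layers are consistent and that the ``intrinsic neuron'' count --- i.e., the number of distinct neurons with fixed parameters when the repeated blocks are identified as in \cref{fig:compflow} --- comes out to exactly $366d+365$ rather than something larger. A secondary subtlety is ensuring the intermediate quantities fed into $\tilde g$ genuinely lie in the interval on which \cref{EUAF1D} was invoked for $g$; since EUAF approximations need not respect range constraints automatically, I would either enlarge the target interval for $g$ to a slightly larger compact set containing all possible perturbed inner sums, or insert a fixed EUAF ``clipping'' gadget (costing $\mathcal{O}(1)$ neurons) to project back into $[0,1]$, and then fold that constant into the final count.
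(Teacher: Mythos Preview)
Your proposal is correct and follows essentially the same approach as the paper: affine reduction to $[0,1]^d$, KST with a single outer function, \cref{EUAF1D} applied to each of the $2d+2$ univariate pieces, and the uniform-continuity/triangle-inequality error split $\epsilon/(2(2d+1))$. The paper resolves your ``secondary subtlety'' via the clipping route, realizing $\min\{\max\{t,0\},1\}=\tfrac{3}{2}\sigma(\tfrac{1}{3}t+\tfrac{1}{3})-\tfrac{1}{2}\sigma(t+1)$ at a cost of $3$ extra neurons per inner block, so the $6d+5$ you were unsure about is exactly $3(2d+1)$ for clipping plus $1$ intrinsic neuron for the $\lambda$-combination and $1$ for the final sum---not routing layers.
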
	
		
	\begin{proof}
		Let $\mathbf{x}=(x_1,\ldots,x_d)^{\textsf{T}}$ and $\mathbf{y}=(y_1,\ldots,y_d)^{\textsf{T}}$. Define $\tilde{\mathcal{L}}(t):=a + (b-a) t$ for $t \in [0,1]$ and $\tilde{f}(y_1,\dots,y_d):= f(\tilde{\mathcal{L}}(y_1),\dots,\tilde{\mathcal{L}}(y_d))$ for all $(y_1,\ldots,y_d)^{\textsf{T}} \in [0,1]^d$. Clearly,  $\tilde{f} \in C([0,1]^d)$. By Theorem \ref{KST}, we have 
		\begin{equation} \label{ftil}
		\tilde{f}(\mathbf{y}) = \tilde{f}(y_1,\ldots,y_d) = \sum_{i=1}^{2d+1} g \left( \sum_{j=1}^{d} \lambda_j \tilde{h}_i(y_j) \right), 
		\quad y_j \in [0,1] \text{ for all } 1\le j \le d,
		\end{equation}
		where $\sum_{j=1}^d \lambda_j \le 1$ with $\lambda_j >0$ for each $j$, each $\tilde{h}_i$ is a continuous strictly increasing function mapping the interval $[0,1]$ to itself, and $g \in C([0,1])$. If we define $\mathcal{L}(t):= (t-a)/(b-a)$ for $t \in [a,b]$, then \eqref{ftil} yields 
		\[
		f(x_1,\ldots,x_d) = f(\tilde{\mathcal{L}}(y_1),\ldots,\tilde{\mathcal{L}}(y_d)) = \tilde{f}(y_1,\ldots,y_d) = \sum_{i=1}^{2d+1} g \left( \sum_{j=1}^{d} \lambda_j \tilde{h}_i(y_j) \right) = \sum_{i=1}^{2d+1} g \left( \sum_{j=1}^{d} \lambda_j h_i(x_j) \right),
		\]
		where $h_i:=\tilde{h}_i \circ \mathcal{L}$ and $x_j \in [a,b]$ for all $i,j$. Note that each $h_i$ is now a continuous function mapping the interval $[a,b]$ to $[0,1]$. 
		
		Now, arbitrarily fix $\epsilon >0$. First, we focus on the approximation of the outer function $g$. Since $g$ is a uniformly continuous function on $[0,1]$, we know that there exists $\delta >0$ such that 
		\begin{equation} \label{guc}
		|g(z_1) - g(z_2)| < \frac{\epsilon}{2(2d+1)} \quad \text{for all } z_1, z_2 \in [0,1] \text{ with } |z_1 - z_2| < \delta.
		\end{equation}
		Additionally, by Theorem \ref{EUAF1D}, we know that there is an EUAF network $\tilde{\phi}$ with width 36 and depth 5 such that 
		\begin{equation} \label{gphi}
		|g(z) - \tilde{\phi}(z)| < \frac{\epsilon}{2(2d+1)} \quad \text{for all } z \in [0,1].
		\end{equation}
		Next, we turn to the approximation of each inner function $h_i$. For each $i$, we know by Theorem \ref{EUAF1D} again that there is an EUAF network $\tilde{\psi}_i$ with width $36$ and depth $5$ such that 
		\be \label{diffhz}
		|h_i(z) - \tilde{\psi}_i(z)| < \delta \quad \text{for all } z \in [a,b]. 
		\ee
		Define $\psi_i:=\min\{\max\{\tilde{\psi}_i,0\},1\}$. If $\tilde{\psi}_{i}(z) < 0$ for some $z \in [a,b]$, then 
		\[
		h_i(z) - \psi_{i}(z) < h_i(z) - \tilde{\psi}_{i}(z) < \delta,
		\]
		since $h_i$ is a strictly increasing continuous function whose range is contained in $[0,1]$. Otherwise, if $\tilde{\psi}_{i}(z) >1$ for some $z \in [a,b]$, then 
		\[
		\psi_{i}(z) - h_{i}(z) < \tilde{\psi}_{i}(z)-h_{i}(z) < \delta
		\]
		due to the same reason. Thus, we have 
		\[
		|h_i(z) - \psi_i(z)| \le |h_i(z) - \tilde{\psi}_i(z)| < \delta  \quad \text{for all } z \in [a,b]. 
		\]
		By \eqref{EUAF}, we observe that 
		\[
		\min\{\max\{t,0\},1\} = \frac{1}{2}\left((t+1) - \sigma(t+1)\right) = \frac{3}{2}\sigma\left(\frac{1}{3}t + \frac{1}{3}\right) - \frac{1}{2}\sigma(t+1) \quad \text{for all } t \in[-1,2],
		\]
		which implies that $\psi_i$ can be constructed by adding 6 more parameters to further process the output of $\tilde{\psi}_i$. 
		%
		%
		See \cref{fig:psitil} for a visualization of the network $\psi_i$. Since $\sum_{j=1}^{d} \lambda_j = 1$ with $\lambda_j>0$ and the range of $\psi_i$ is contained in $[0,1]$ for all $1 \le i \le 2d+1$, we can immediately see that the range of $\sum_{j=1}^{d} \lambda_j \psi_i(x_j)$ is contained in $[0,1]$.
		
		\begin{figure}[htbp]
			\includegraphics[width=\textwidth]{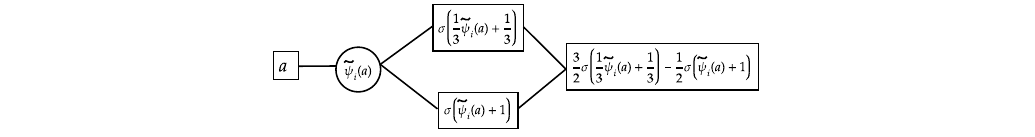}
			\caption{The network $\psi_i$. Note that $\tilde{\psi}_i$ has 5437 parameters as discussed in the remark following Theorem \ref{EUAF1D}. Since we need to add 6 more parameters to further process the output of $\tilde{\psi}_i$, the total number of parameters is $5443$.}
			\label{fig:psitil}
		\end{figure}
		
		Define
		\[
		\phi(\mathbf{x}) := \sum_{i=1}^{2d+1} \tilde{\phi} \left( \sum_{j=1}^{d} \lambda_j \psi_i(x_j) \right), 
		\quad \mathbf{x} \in [a,b]^d.
		\]
		By \eqref{diffhz}, we observe that 
		\be \label{gucdelta}
		\left| \sum_{j=1}^{d} \lambda_j h_i(x_j) - \sum_{j=1}^{d} \lambda_j \psi_i(x_j)\right| \le \sum_{j=1}^{d} \lambda_j |h_i(x_j) - \psi_i(x_j)| < \sum_{j=1}^{d} \lambda_j \delta = \delta,
		\quad 1\le i \le 2d+1.
		\ee
		Therefore, for $\mathbf{x} \in [a,b]^d$, we have 
		\begin{align*}
			& |f (\mathbf{x}) - \phi(\mathbf{x})| = \left| \sum_{i=1}^{2d+1} g \left( \sum_{j=1}^{d} \lambda_j h_i(x_j) \right) - \sum_{i=1}^{2d+1} \tilde{\phi} \left( \sum_{j=1}^{d} \lambda_j \psi_i(x_j) \right) \right|\\
			& \le \left| \sum_{i=1}^{2d+1} g \left( \sum_{j=1}^{d} \lambda_j h_i(x_j) \right) - \sum_{i=1}^{2d+1} g \left( \sum_{j=1}^{d} \lambda_j \psi_i(x_j) \right)\right| + \left| \sum_{i=1}^{2d+1} g \left( \sum_{j=1}^{d} \lambda_j \psi_i(x_j) \right) - \sum_{i=1}^{2d+1} \tilde{\phi} \left( \sum_{j=1}^{d} \lambda_j \psi_i(x_j) \right) \right| \\
			& \le \sum_{i=1}^{2d+1} \left| g \left( \sum_{j=1}^{d} \lambda_j h_i(x_j) \right) - g \left( \sum_{j=1}^{d} \lambda_j \psi_i(x_j) \right)\right| + \sum_{i=1}^{2d+1} \left| g \left( \sum_{j=1}^{d} \lambda_j \psi_i(x_j) \right) - \tilde{\phi} \left( \sum_{j=1}^{d} \lambda_j \psi_i(x_j) \right) \right|\\
			& < \frac{\epsilon}{2(2d+1)} (2d+1) +  \frac{\epsilon}{2(2d+1)} (2d+1) = \epsilon,
		\end{align*}
		where we applied \eqref{guc} to the first term of the last inequality (since \eqref{gucdelta} holds) and \eqref{gphi} to the second term of the last inequality. 
		
		Finally, we count the number of nonzero parameters used in $\phi(\mathbf{x})$. 
		We first apply the sub-network $\psi_i$, where $1\le i \le 2d+1$, repeatedly to each $x_j$. This implies that we require at most $5443(2d+1)$ nonzero parameters. For a given $i$, we take a linear combination of $\psi_i$ evaluated at all $x_j$, where $1 \le j \le d$. This operation requires $d$ parameters. Afterwards, we apply the sub-network $\tilde{\phi}$ repeatedly to each aforementioned output, which involves $5443$ parameters. Finally, we add up all outputs which requires $2d+1$ parameters.  Therefore, the total number of nonzero parameters is at most $5443(2d+1) + d + 5443 + 2d+1 = \totalneurons$. The proof is completed.   
		%
	\end{proof}
	
	 

	\begin{remark} \label{remark:thm}
		\normalfont
		\textcolor{black}{The composition of EUAF networks described in Theorem \ref{thm:uap} can be viewed in \cref{fig:compflow}. Define $\psi:=(\psi_1,\dots,\psi_{2d+1})\tp$ (i.e., we vertically stack $\psi_i$ for all $i=1,\dots,2d+1$). This operation preserves the number of nonzero parameters (the new weight matrices can be obtained by forming a block diagonal matrix of weight matrices from individual networks, while the new biases can be formed by vertically stacking biases from individual networks). We sequentially evaluate $\psi$ at the points $x_j$ for all $1 \le j \le d$ producing $d$ vectors of length $2d+1$. Afterwards, we rearrange this set of vectors to a new set of $2d+1$ vectors of length $d$ and compute the weighted average of entries in each vector with weights $\lambda_j$, $1 \le j \le d$. At this point, we have $2d+1$ scalars, which we sequentially feed into $\tilde{\phi}$ and get $2d+1$ scalars as outputs. As the final step, we add all these $2d+1$ scalars up to obtain our final output. We emphasize again that $\psi$ and $\tilde{\phi}$ are EUAF networks with width $36(2d+1)$ and $36$ respectively, both with depth 5. The proposed network has a nonstandard representation due to the sequential and repeated evaluations of $\psi$ and $\tilde{\phi}$.}
	\end{remark}

	\begin{figure}[htbp]
		\includegraphics[width=\textwidth]{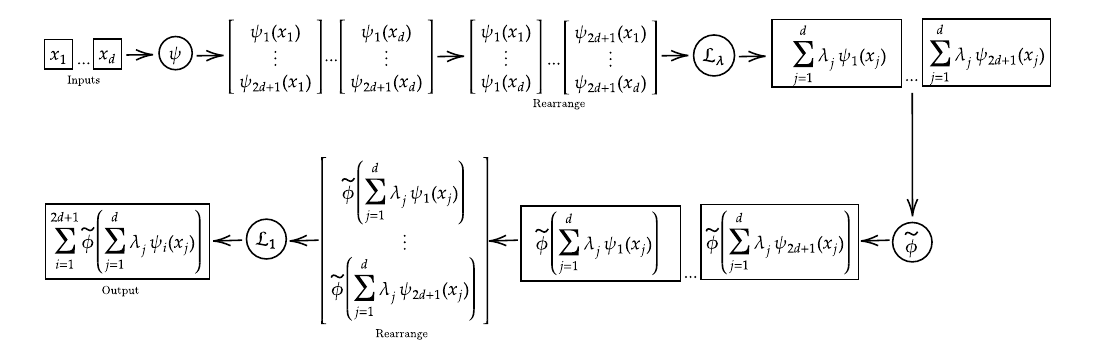}
		\caption{\textcolor{black}{The composition of EUAF networks mentioned in Theorem~\ref{thm:uap}, where $\psi:=(\psi_1,\dots,\psi_{2d+1})\tp$, and $\mathcal{L}_{\pmb{\lambda}}$, $\mathcal{L}_{\mathbf{1}}$ respectively represent an inner product with the vector $(\lambda_1,\dots,\lambda_d)$ and the vector of ones to obtain a scalar output. See Remark \ref{remark:thm}.}}
		\label{fig:compflow}
	\end{figure}

	Note that the EUAF network in \cite[Theorem 1]{SYZ22} has width $36d(2d+1)$ and depth $11$ with a total of $5437(d + 1)(2d + 1)$ parameters, because the version of KST used in their proof requires $(2d+1)d$ inner functions and $2d+1$ outer functions. Employing the version of KST in Theorem \ref{KST} not only allows us to use at most $\totalneurons$ nonzero parameters, but it also simplifies the proof of the existence of an EUAF network with a fixed architecture that can approximate any function in $C([a,b]^d)$ with arbitrary accuracy. Even though the total number of nonzero parameters in Theorem \ref{thm:uap} is larger than that in \cite[Theorem 4]{MP99}, its order of magnitude is the same, $\mathcal{O}(d)$, and the activation function in our network is explicitly known.

	Next, we present a family of continuous functions that requires at least width $d$ (or to put differently, at least $d$ neurons/parameters) for it to be approximated with arbitrary accuracy. In the following, we assume that the depth is fixed. 
	
	\begin{theorem} \label{example}
		Let $f \in C([-\tfrac{1}{2},\tfrac{1}{2}]^d)$ such that $f(\mathbf{0}) = 0$ (i.e., it vanishes at the origin) and \textcolor{black}{$|f(\mathbf{x})| = |f(x_1,\dots,x_d)| \ge c$ for some $c>0$ if $x_j=\tfrac{1}{2}$ for some $1\le j \le d$ (i.e., $|f(\mathbf{x})|$ is bounded away from zero if at least one of its inputs is equal to $\tfrac{1}{2}$)}. Then, for any given activation function $\sigma$, the $\sigma$ network with width less than $d$, and fixed depth $L\ge 1$ cannot approximate $f$ with arbitrary accuracy.
	\end{theorem}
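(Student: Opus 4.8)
The plan is to exploit a rank deficiency in the first affine layer of the network. A $\sigma$ network $\phi$ of the form \eqref{NN} with width at most $d-1$ has an input weight matrix $\mathbf{W}_0\in\R^{N_1\times d}$ with $N_1\le d-1$, hence $\ker(\mathbf{W}_0)\neq\{\mathbf{0}\}$; moreover $\phi=\Psi\circ\mathcal{L}_0$ for a fixed map $\Psi$ (the composition of all layers after $\mathcal{L}_0$), so $\phi$ is constant along every direction in $\ker(\mathbf{W}_0)$. The idea is to find such a direction along which $f$ changes from $0$ to something of magnitude at least $c$, and then conclude by the triangle inequality.

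First I would argue by contradiction: assume some $\sigma$ network $\phi$ of the form \eqref{NN} with hidden widths $N_1,\dots,N_L$ all at most $d-1$ and fixed depth $L$ satisfies $\sup_{\mathbf{x}\in[-\frac12,\frac12]^d}|f(\mathbf{x})-\phi(\mathbf{x})|<c/2$. I would pick $\mathbf{v}=(v_1,\dots,v_d)\in\ker(\mathbf{W}_0)\setminus\{\mathbf{0}\}$, and, after replacing $\mathbf{v}$ by $-\mathbf{v}$ if necessary, choose an index $k$ attaining $|v_k|=\max_j|v_j|$ with $v_k>0$. Then I would set $t:=1/(2v_k)$ and $\mathbf{p}:=t\mathbf{v}$: the $k$-th coordinate of $\mathbf{p}$ equals exactly $\tfrac12$, while $|tv_j|=|v_j|/(2v_k)\le\tfrac12$ for every $j$, so $\mathbf{p}\in[-\frac12,\frac12]^d$, and the hypothesis on $f$ gives $|f(\mathbf{p})|\ge c$.

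Since $\mathbf{W}_0\mathbf{v}=\mathbf{0}$, we have $\mathcal{L}_0(\mathbf{p})=\mathbf{W}_0(t\mathbf{v})+\mathbf{b}_0=\mathbf{b}_0=\mathcal{L}_0(\mathbf{0})$, hence $\phi(\mathbf{p})=\Psi(\mathcal{L}_0(\mathbf{p}))=\Psi(\mathcal{L}_0(\mathbf{0}))=\phi(\mathbf{0})$. Using $f(\mathbf{0})=0$ and the triangle inequality,
\[
c\le|f(\mathbf{p})|=|f(\mathbf{p})-f(\mathbf{0})|=\bigl|(f(\mathbf{p})-\phi(\mathbf{p}))-(f(\mathbf{0})-\phi(\mathbf{0}))\bigr|\le|f(\mathbf{p})-\phi(\mathbf{p})|+|f(\mathbf{0})-\phi(\mathbf{0})|<\tfrac c2+\tfrac c2=c,
\]
a contradiction. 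Thus no $\sigma$ network of width less than $d$ approximates $f$ to within $c/2$ in the sup norm, which is the assertion; in fact the depth never enters the argument, so the conclusion holds for every depth $L\ge1$.

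I do not expect a genuine obstacle: the entire content is the nontriviality of $\ker(\mathbf{W}_0)$. The only slightly delicate point is the scaling of the kernel vector so that it simultaneously stays inside the cube and reaches a face of the prescribed form $x_k=+\tfrac12$ (rather than $x_k=-\tfrac12$), which is precisely why I normalize the sign of $\mathbf{v}$ and select its coordinate of largest absolute value; had the hypothesis been stated symmetrically on $\{x_j=\pm\tfrac12 \text{ for some } j\}$, even this step would be unnecessary.
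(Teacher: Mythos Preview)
Your argument is correct and rests on the same mechanism as the paper's proof: the first weight matrix $\mathbf{W}_0\in\R^{N_1\times d}$ with $N_1\le d-1$ has nontrivial kernel, so $\phi$ is constant along a line through the origin, and one can push along that line until some coordinate hits $\tfrac12$ while staying in the cube. Your execution, however, is considerably more streamlined. The paper passes to the reduced row echelon form of $\mathbf{W}_0$, partitions the coordinates into pivot, free, and zero index sets, and then runs a case analysis on whether $|\phi(\mathbf{0})|$ exceeds $\epsilon$, picking the free variable attached to the largest coefficient to force a basic variable to $\tfrac12$. You bypass all of this by taking any nonzero kernel vector, flipping its sign so the entry of largest modulus is positive, and rescaling so that entry becomes exactly $\tfrac12$; a single triangle inequality with the fixed threshold $c/2$ then finishes. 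Your route avoids the echelon-form bookkeeping and the separate treatment of the two $|B|$ cases, and it makes explicit the quantitative content (no width-$(d{-}1)$ network comes within $c/2$). The paper's approach buys nothing extra here; yours is simply a cleaner realization of the same idea.
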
	
	
	\begin{proof}
		We use a proof by contradiction. Assume that for each $\epsilon>0$, there is a $\sigma$ network $\phi$ with width $d-1$ and fixed depth $L\ge 1$ such that
		\begin{equation} \label{ua3}
			\left|f(\mathbf{x}) - \phi(\mathbf{x}) \right| < \epsilon \quad \text{for all } \mathbf{x} \in [-\tfrac{1}{2},\tfrac{1}{2}]^d,
		\end{equation}
		where $\phi$ is defined as in \eqref{NN} with $N_i=d-1$ for $1\le i \le L$. More explicitly, 
		\be \label{phiW0}
		\phi(\mathbf{x}) = \LL_{L}(\sigma(\LL_{L-1}(\sigma(\dots \LL_{1}(\sigma(\mathbf{W}_0 \mathbf{x}+ \mathbf{b}_0))\dots)))),
		\ee
		where $\mathbf{W}_0 \in \R^{(d-1) \times d}$ and $\mathbf{b}_0 \in \R^{d-1}$. 
		
		Arbitrarily fix $\epsilon>0$. Consider the homogeneous linear system $\mathbf{W}_0 \tilde{\mathbf{x}} = \mathbf{0}$, where $\tilde{\mathbf{x}} := (\tilde{x}_1,\dots,\tilde{x}_d)\tp$. Such a system clearly has infinitely many solutions. Next, define $B:=\LL_{L}(\sigma(\LL_{L-1}(\sigma(\dots \LL_{1}(\sigma(\mathbf{b}_0))\dots))))$. Suppose that $|B| > \epsilon$. Letting $\mathbf{x}=\mathbf{0}$ in \eqref{ua3}, we have 
		\be \label{contr}
		\begin{aligned}
		\epsilon > \left|f(\mathbf{0}) - B \right| \ge \left||f(\mathbf{0})| - |B| \right| = |B|,
		\end{aligned}
		\ee
		where we used our assumption that $f(\mathbf{0})=0$. We obtain $\epsilon > |B| > \epsilon$, which is a contradiction. Now, suppose that $|B|<\epsilon$. Pick any nontrivial solution $\tilde{\mathbf{x}}$, define 
		\[ \hat{\mathbf{x}} := \frac{\text{sign}(\argmax_{1\le i \le d} |\tilde{x}_{i}|)}{2 \max_{1\le i \le d} |\tilde{x}_{i}|} \tilde{\mathbf{x}}.
		\]
		Clearly, $\hat{\mathbf{x}} \in [-\tfrac{1}{2},\tfrac{1}{2}]^{d}$ and at least one of its component is equal to $\tfrac{1}{2}$. Suppose that $|B|<\epsilon$. We have
		\[
		\epsilon > ||f(\hat{\mathbf{x}})| - |B|| \ge |f(\hat{\mathbf{x}})| - |B| \ge |f(\hat{\mathbf{x}})|- \epsilon \ge c - \epsilon, 
		\]
		where we used our assumption that $|f(\mathbf{x})| = |f(x_1,\dots,x_d)| \ge c$ for some $c>0$ if $x_j=\tfrac{1}{2}$ for some $1\le j \le d$. Therefore, we have a contradiction. The proof is completed. 
	\end{proof}
	
	The assumption on the support of the function $f$ can be generalized to $[a,b]^d$, where $a < 0 < b$. We can still apply the same argument by imposing analogous conditions on the function $f$ at $x_j = b$. Finally, we provide a concrete example of a function satisfying conditions in the above theorem.
	
	\begin{example}
		\normalfont 
		\textcolor{black}{Let $f(\mathbf{x})=\sum_{j=1}^{d} c_j h_j(x_j)$, where for all $1\le j \le d$, $c_j>0$, $x_j \in [-\tfrac{1}{2},\tfrac{1}{2}]$, and $h_j$ is a nonnegative continuous function such that $h_j(0)=0$ and $h_j(\tfrac{1}{2})\neq0$. Clearly, $f \in C([-\tfrac{1}{2},\tfrac{1}{2}]^d)$, $f(\mathbf{0})=0$, and $|f(\mathbf{x})| = |f(x_1,\dots,x_d)| \ge \left(\min_{1 \le j \le d} c_j \right) \left( \min_{1 \le j \le d} h_j(\tfrac{1}{2})\right)$ if $x_j = \tfrac{1}{2}$ for some $1 \le j \le d$.} The above theorem states that for any given activation function $\sigma$, the $\sigma$ network with width less than $d$, and fixed depth $L\ge 1$ cannot approximate $f$ with arbitrary accuracy.
	\end{example}

	Theorem \ref{example} presents a family of continuous functions, which cannot be approximated with arbitrary accuracy, when we let the width to be $d-1$ and fix the depth to be $L\ge 1$. This implies that the $\sigma$ network actually requires at least width $d$ and consequently at least $d$ neurons/parameters to achieve the desired approximation property. Theorems \ref{thm:uap} and \ref{example} combined suggest that the number of nonzero parameters in our network for approximating functions in $C([a,b]^d)$ is optimal in the sense that it grows linearly with the input dimension $d$. 
	
	\section*{Acknowledgment}
	M.~M. was partially supported by NSERC Postdoctoral Fellowship. H.~Y. was partially supported by the US National Science Foundation under awards DMS-2244988, DMS-2206333, the Office of Naval Research Award N00014-23-1-2007, and the DARPA D24AP00325-00.

\end{document}